\newcommand\fedname{FedSense}


\documentclass[10pt,twocolumn,letterpaper]{article}

\usepackage[pagenumbers]{iccv} 

%
%
\usepackage[dvipsnames]{xcolor}


\definecolor{iccvblue}{rgb}{0.21,0.49,0.74}
\usepackage[pagebackref,breaklinks,colorlinks,allcolors=iccvblue]{hyperref}

\usepackage[linesnumbered,lined,ruled]{algorithm2e}
\usepackage[export]{adjustbox}
\usepackage{threeparttable}
\usepackage{multirow}
\usepackage{enumitem}
\usepackage{newfloat}
\usepackage{listings}
\usepackage{colortbl}
\usepackage{arydshln}
\usepackage{amsfonts}
\usepackage{caption}
\usepackage{wrapfig}
\usepackage{amssymb}
\usepackage{pifont}
\usepackage{float}
\usepackage{bbm}
\usepackage{bm}
\usepackage{mathrsfs}  
\usepackage{tikz}

\definecolor{mygray}{gray}{.9}
\definecolor{mygreen}{RGB}{93,173,85}
\definecolor{mywarning}{RGB}{233,144,61}

\definecolor{DarkBlue}{RGB}{64,101,149}
\definecolor{azure}{rgb}{0.0, 0.5, 1.0}
\definecolor{gray}{rgb}{0.3, 0.3, 0.3}
\definecolor{DarkGreen}{rgb}{0,0.5,0}
\definecolor{DarkYellow}{RGB}{191,144,0}
\definecolor{myyellow}{RGB}{237,209,68}
\definecolor{DarkRed}{rgb}{0.7, 0, 0} 
\definecolor{DarkBlue}{rgb}{0, 0, 0.7} 

\definecolor{lightblue}{HTML}{D7F6FF}

\newcommand{\circlednum}[2][black]{\strut\raisebox{-1pt}{\tikz{\node[circle,inner sep=0.6pt,fill=#1,font=\scriptsize\bfseries\color{white}]{#2};}}}

\newcolumntype{x}[1]{>{\centering\arraybackslash}p{#1pt}}
\newcolumntype{I}{!{\vrule width 1pt}}

\usepackage{tcolorbox}
\tcbuselibrary{theorems}
\definecolor{lightgray}{gray}{.9}
\definecolor{deepgray}{gray}{.8}
\tcbset{highlight math/.append style={left=0mm,right=0mm,top=0mm,bottom=0mm, colframe=white}}

\makeatletter
\newcommand{\thickhline}{%
    \noalign {\ifnum 0=`}\fi \hrule height 0.6pt
    \futurelet \reserved@a \@xhline
}

\makeatletter

\makeatother

%
%


\definecolor{darksalmon}{rgb}{0.91, 0.59, 0.48}
\definecolor{emerald}{rgb}{0.31, 0.78, 0.47}
\definecolor{green(pigment)}{rgb}{0.0, 0.65, 0.31}
\definecolor{amaranth}{rgb}{0.9, 0.17, 0.31}
\definecolor{iris}{rgb}{0.35, 0.31, 0.81}
\definecolor{uu}{rgb}{0.95, 0.51, 0.51}
\definecolor{spirodiscoball}{rgb}{0.06, 0.75, 0.99}
\usepackage{bbding}

%


\title{Towards Privacy-preserved Pre-training of Remote Sensing Foundation Models with Federated Mutual-guidance Learning}

\author{Jieyi Tan$^{1}$, Chengwei Zhang$^{2}$, Bo Dang$^{1}$, Yansheng Li$^{1,}$\thanks{Corresponding author.} \\
$^{1}$Wuhan University \quad $^{2}$University of Cambridge \\
{\tt\small tanjieyi@whu.edu.cn} \quad {\tt\small yansheng.li@whu.edu.cn} \\
}

\newtheorem{theorem}{Theorem}
\newtheorem{assumption}{Assumption}
\newtheorem{lemma}{Lemma}
\newtheorem{proof}{Proof}

\begin{document}
\maketitle
\begin{abstract}
Traditional Remote Sensing Foundation models (RSFMs) are pre-trained with a data-centralized paradigm, through self-supervision on large-scale curated remote sensing data. For each institution, however, pre-training RSFMs with limited data in a standalone manner may lead to suboptimal performance, while aggregating remote sensing data from multiple institutions for centralized pre-training raises privacy concerns. Seeking for collaboration is a promising solution to resolve this dilemma, where multiple institutions can collaboratively train RSFMs without sharing private data. In this paper, we propose a novel privacy-preserved pre-training framework \textbf{(\fedname)}, which enables multiple institutions to collaboratively train RSFMs without sharing private data. However, it is a non-trivial task hindered by a vicious cycle, which results from model drift by remote sensing data heterogeneity and high communication overhead. To break this vicious cycle, we introduce Federated Mutual-guidance Learning. Specifically, we propose a Server-to-Clients Guidance (SCG) mechanism to guide clients updates towards global-flatness optimal solutions. Additionally, we propose a Clients-to-Server Guidance (CSG) mechanism to inject local knowledge into the server by low-bit communication. Extensive experiments on four downstream tasks demonstrate the effectiveness of our \fedname~in both full-precision and communication-reduced scenarios, showcasing remarkable communication efficiency and performance gains.
\end{abstract}
    
\section{Introduction}
\label{sec:intro}

\begin{figure}
    \centering
    \includegraphics[width=1\linewidth]{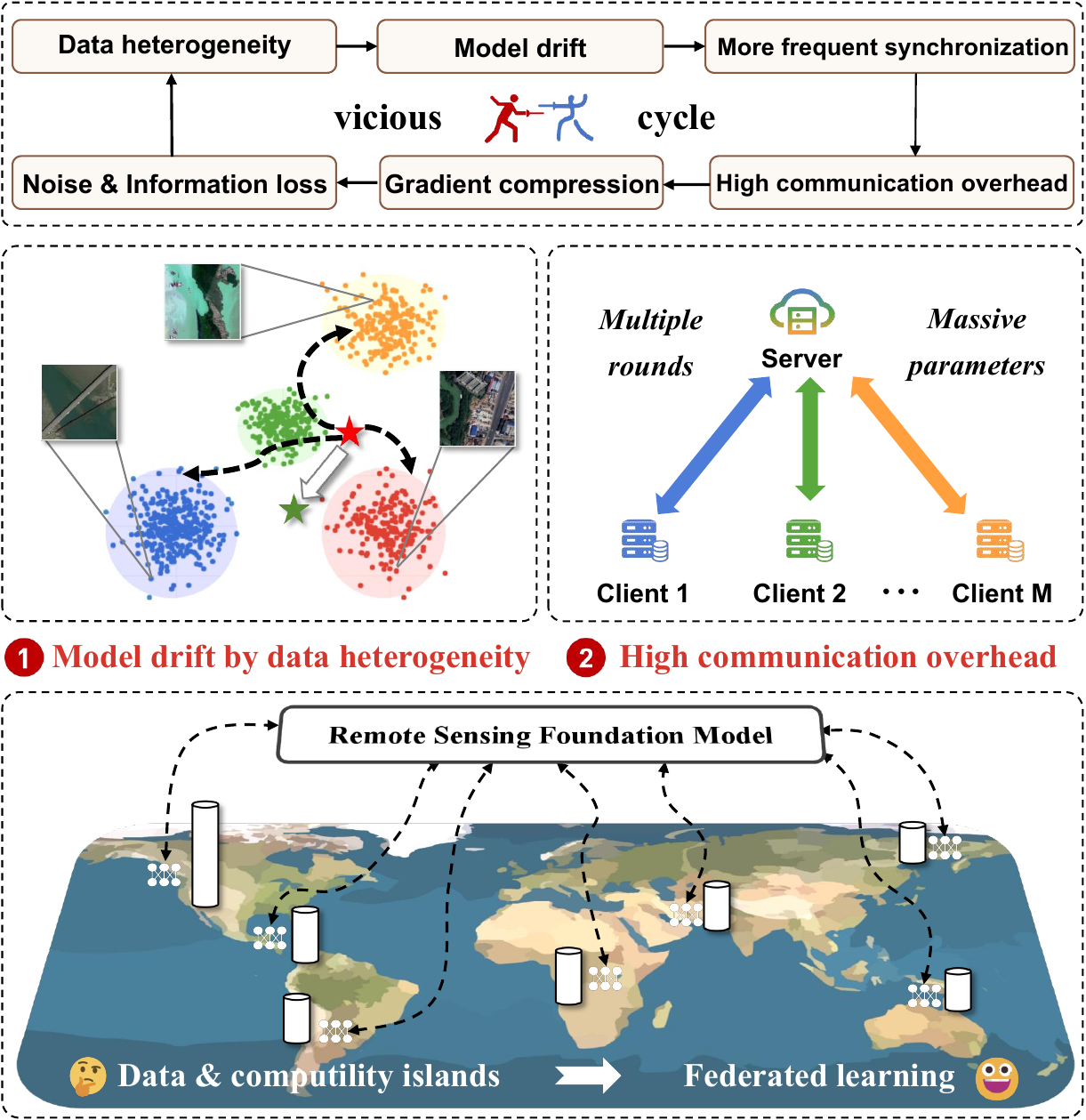}
    \caption{\textbf{Illustration of privacy-preserved pre-training of RSFMs with FL to bridge data islands.} The vicious cycle between data heterogeneity-induced model drift and communication bottlenecks reveals a critical performance-efficiency trade-off.
    }
    \label{fig:motivation}
\end{figure}

Recently, Remote Sensing Foundation Models (RSFMs) have gained increasing attention due to their impressive applicability and performance across various Earth Observation tasks by producing general-purpose visual features~\cite{rsfm,rse_dinov2}. Traditionally, existing RSFMs follow a data-centralized training paradigm. They are built through self-supervision on large-scale curated remote sensing data, gathered from diverse sources~\cite{skysense}. These data are collected by satellites, drones, or aerial platforms, and are stored in centralized archives by different institutions. A single institution could not pre-train RSFMs well due to the limited data scale and diversity~\cite{Innovation}. At the same time, it is challenging to aggregate data from multiple institutions for data-centralized training due to geo-information security, storage bottlenecks, and industrial competitions~\cite{intellect,FedLLM,GRSM_AI,GRSM_FL}. Such contradictory requirements urgently demand paradigm-shifting frameworks that reconcile computational synergies.

A more practical and realistic solution is to collaboratively inject remote sensing knowledge learned from private data owned by institutions into foundation models in a distributed manner~\cite{huayi,1,2,3,TPAMI_LAQ}. In recent years, federated learning (FL) emerges as a promising privacy-preserving alternative, enabling collaborative model training without raw data exchange through periodic model aggregation~\cite{FL_survey,nc_medical}. Self-supervised learning (SSL) operates on the principle of latent structure exploitation, where models learn transferable representations by solving pretext tasks derived from data's intrinsic attributes without human annotations. However, it is a non-trivial task to train RSFMs by combining SSL with FL directly~\cite{ssl_TPAMI}. Recent studies in federated self-supervised learning (FSSL) mainly focus on natural images, showing potential for distributed visual representation learning. MocoSFL~\cite{mocosfl} focus cross-device SSL by leveraging momentum contrast on mobile devices. $\text{FedU}^2$ designs disperses local data representations uniformly using spherical Gaussian sampling and optimizes global-local model consistency. FedMKD~\cite{FedMKD} proposes a resource-adaptive FSSL to address the architecture heterogeneity and class skew issues. However, few studies have explored the \textbf{vicious cycle} challenge in FSSL, which manifests more severely in remote sensing~\cite{a,b,c}. 

To elaborate, two critical challenges are involved in this vicious cycle, as illustrated in Fig.~\ref{fig:motivation}. \circlednum{1} \textbf{model drift by data heterogeneity.} Remote sensing data is inherently heterogeneous due to diverse sensor types, resolutions, and geographic distribution. This heterogeneity leads to significant variability in data distributions across different institutions, causing convergence inefficiencies and degraded model performance~\cite{GRSM_FL}. \circlednum{2} \textbf{high communication overhead.} Foundation models are characterized by massive parameters (unlike neural networks with shallow architectures), leading to excessive communication costs and bandwidth demands~\cite{swin}. The two challenges form a \textbf{vicious cycle}, where data heterogeneity necessitates more frequent model synchronization to mitigate drift, thereby amplifying communication costs. Conversely, communication compression can reduce overhead, introducing noise and information loss which further exacerbates client-side model drift. This bidirectional aggravation fundamentally undermines the efficiency and model consistency in distributed pre-training of RSFMs with FL.

In this paper, we propose a new challenging yet meaningful task: \textbf{privacy-preserved pre-training of RSFMs}. We propose \fedname, a novel FSSL framework with Federated Mutual-guidance Learning, to address the vicious cycle challenges. First, we introduce a Server-to-Client Mutual Guidance (SCG) mechanism to guide client models towards a global consensus, mitigating model drift by data heterogeneity. Second, we propose a Client-to-Server Guidance (CSG) mechanism to distill client models' knowledge into a server-side reference model, reducing communication overhead. Our \fedname~provides an integrated solution to resolve the vicious cycle challenges. As a result, our \fedname~achieves state-of-the-art performance compared to existing FSSL methods with higher performance and lower communication overhead. 

To sum up, this paper takes the first step towards privacy-preserved pre-training of RSFMs as far as we know. The main contributions of this work are as follows:
\begin{itemize}
    \item We propose \fedname, establishing a new paradigm for privacy-preserved pre-training of RSFMs. To the best of our knowledge, it is the first generic FL framework that supports mainstream pre-training methods, including contrastive learning and masked image modeling.
  
    \item We resolve the vicious cycle challenges in FSSL by introducing Federated Mutual-guidance Learning, which is composed of Server-to-Client Mutual Guidance (SCG) mechanism and Client-to-Server Guidance (CSG) mechanism. The integrated solution effectively mitigates model drift by data heterogeneity and reduces communication overhead.
   
    \item  We pre-train a RSFM with 10 participants with million-scale remote sensing data and evaluate the performance on four downstream tasks. Experimental results on eight datasets demonstrate that \fedname~outperforms existing FSSL methods in terms of performance and communication overhead.
\end{itemize}

\section{Related Work}
\label{related}

\begin{figure*}[t]
    \centering
    \includegraphics[width=1\linewidth]{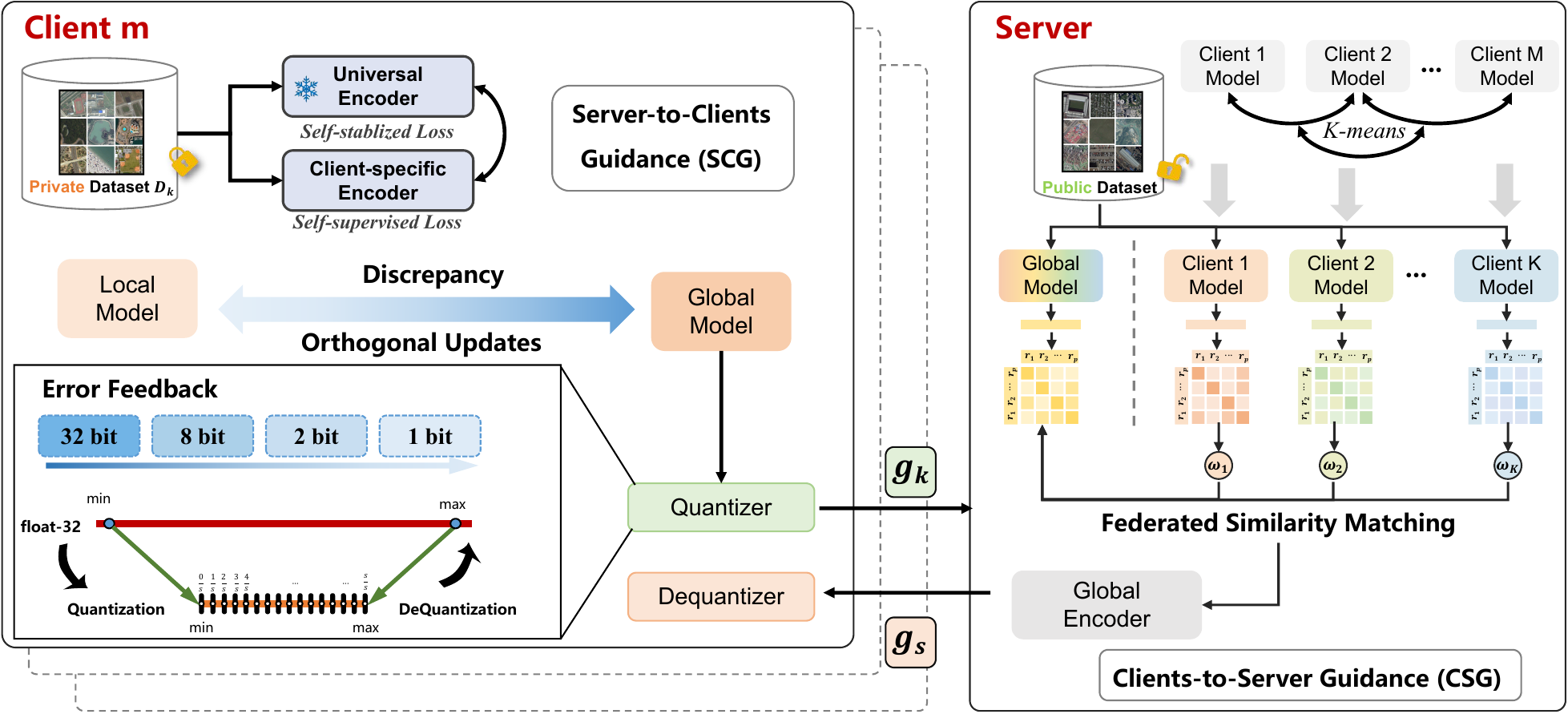}
    \caption{\textbf{Overview of \fedname.} The framework includes two components: Server-to-Clients Guidance (SCG) and Clients-to-Server Guidance (CSG). SCG guides clients' updates towards global-flatness optimal solutions, while CSG injects local knowledge into the server by low-bit communication.}
    \label{fig: method}
\end{figure*}

\subsection{Centralized Pre-training for RSFMs}
Recent years have witnessed significant progress in data-centralized pre-training paradigms for RSFMs~\cite{VFM_RRS,S2MAE}. A series of studies focus on constructing large-scale pre-training datasets and developing specialized algorithms tailored to remote sensing data characteristics. For instance, [9] introduced a rotation-variable window attention mechanism to handle large-scale and arbitrarily oriented geospatial objects, accompanied by MillionAID, a billion-parameter vision foundation model pre-trained on massive remote sensing imagery. To address dense object detection challenges in remote sensing, RingMo~\cite{ringmo} optimized a masked autoencoder framework by incorporating multi-scale hierarchical representations and task-specific decoding strategies. For multispectral data with rich spectral information, SpectralGPT~\cite{SpectralGPT} treated multispectral images as 3D tensors and proposed a multi-objective reconstruction loss to jointly capture spatial-spectral correlations and spectral sequence dependencies. SkySense~\cite{skysense} further extended multimodal capabilities through spatiotemporal disentanglement and temporal-aware embedding mechanisms, enabling joint contrastive learning across high-resolution optical imagery, and time-series optical/SAR data. This unified framework supports diverse tasks ranging from image-level classification to pixel-level segmentation and crop phenology monitoring. AnySat~\cite{anysat}is a versatile model designed to handle diverse data across resolutions, scales, and modalities.

Different from the above studies, our work takes the first step on privacy-preserved pre-training for RSFMs, and we aim to collaboratively pre-train RSFMs in a novel decentralized manner. This research is orthogonal to the existing centralized pre-training methods tailored to RSFMs, and would be complementary to scale up the performance of RSFMs in real-world applications.

\subsection{Federated Self-Supervised Learning}
Federated Learning (FL)~\cite{fedavg,fedsol} has emerged as a promising paradigm for collaborative model training across decentralized data sources while preserving data privacy. Federated self-supervised learning (FSSL)~\cite{SSL-FL,fedema,fedvssl} have demonstrated potential for privacy-preserving model training. However, existing FSSL methods predominantly address cross-device scenarios characterized by numerous resource-constrained clients (e.g., mobile devices) with homogeneous data distributions, focusing on computational efficiency and communication compression. While L-DAWA~\cite{L-DAWA} mitigates data heterogeneity through hierarchical angular divergence weighting, and $\text{FedU}^2$~\cite{FedU2} and FedMKD~\cite{FedMKD} enhance representation consistency via unified embedding alignment and adaptive knowledge distillation, these methods remain inadequate for cross-institution pre-training of RSFMs.

Existing FSSL focuses on optimizing federated model convergence and mainly support cross-device scenarios~\cite{q,w,e}. Additionally, these methods are with limited applicability to support self-supervised learning frameworks (contrastive learning and masked image modeling) consistently. Thirdly, these methods ignore the vicious cycle in collaboratively pre-training RSFMs, which requires handling the drift challenges and communication efficiency simultaneously. Our work sheds lights on these limitations and proposes Federated Mutual Learning to unleash the power of FSSL for RSFM pre-training.

\section{Methodology}
\label{method}

\subsection{Problem definition}
\label{sec:problem}
An FL framework for privacy-preserving collaborative pre-training of RSFMs aims to learn general-purpose visual representations from distributed unlabeled remote sensing images. The system comprises $M$ institutions (\ie clients) and a central server. Let the $m$-th {\color{DarkRed}{\textbf{client} $c_m$}} possess a private unlabeled dataset $\mathcal{D}_{c_m}$, and {\color{DarkGreen}{\textbf{server} $s$}} maintain a public unlabeled dataset $\mathcal{D}_s$. For each client $c_m$, the local self-supervised objective function $\mathcal{F}_{c_m}$ with parameters $\theta_m$ can be formulated as:
\begin{equation}
\min_{\theta_m} \mathcal{F}_{c_m} = \mathbb{E}_{x \sim \mathcal{D}_{c_m}} \left[ \psi\big(f_{\theta_m}(x^{(a)}), f_{\theta_m}(x^{(b)})\big) \right],
\end{equation}
where $x^{(a)}$ and $x^{(b)}$ are view augmentations or masked modeling of input image $x$, and $\psi(\cdot,\cdot)$ denotes the self-supervised loss function. For contrastive learning, $\psi$ measures feature consistency between augmented views. For masked modeling, $\psi$ computes the reconstruction error between predicted and original pixel values in masked regions. In the $t$-th round, upon receiving $\{\theta_m^{(t)}\}_{m=1}^M$, server $s$ with public data $\mathcal{D}_s$ applies aggregation $\Theta^{(t)} = \sum_{m=1}^M p_{m}\theta_m^{(t)}$, where $p_m=\frac{|\mathcal{D}_{c_m}|}{\sum_{m=1}^M |\mathcal{D}_{c_m}|}$. The optimization objective is:
\begin{equation}
\min_{\Theta} \mathcal{F}_s = \mathbb{E}_{x^s \sim \mathcal{D}_s} \Big[\phi(\Theta, \{\theta_m^{(t)}\}_{m=1}^M, \cdots, x^s)          \Big],
\end{equation}
where $x_s$ is unlabeled remote sensing data sampled from the public dataset, and $\phi$ denotes the loss function further to inject public data knowledge into the global foundation model.

\subsection{Overview of our \fedname}

As shown in~\cref{fig: method}, our \fedname~consists two parts, which are Server-to-Clients Guidance (SCG) and Clients-to-Server Guidance (CSG). The two components are introduced sequentially. We propose a federated self-supervised learning framework for collaborative training of foundation models across multiple institutions. Note that we omit the conversion between gradients and parameters during transmission for conciseness. The algorithm is detailed in~\cref{alg:FedSense}.

\subsection{Server-to-Clients Guidance}

The core insight of SCG is to strike a balance between global knowledge preservation and local model optimization. The server guides clients to optimize their local models while restrict the discrepancy of local and global model. We design a dual loss: self-supervised loss and self-stabilized loss in~\cref{eq:sst}, which are complementary to each other. The former requires models to learn orthogonality to the discrepancy of local and global model, while the latter helps to regularize the model with stabilized knowledge information in federated updates. Thus, the overall training target of the $m$-th client is:
\begin{equation}
\mathcal{L}_m^{\text{total}} = \underbrace{\mathcal{L}_m^{\text{ssl}}(\theta_m; \Theta;\epsilon; \mathcal{R}_C(\theta_m))}_{\text{self-supervised term}} + \underbrace{\mathcal{L}_m^{\text{sst}}(\theta_m;\theta_{uni})}_{\text{self-stablized term}},
\end{equation}
where ${\cal{L}}_m^{\rm{ssl}}$ denotes self-supervised loss. It corresponds to SSL objectives commonly used in the field, such as contrastive loss, and masked reconstruction loss. Note that this is a drift-aware loss incorporated with our proposed optimizing method. We conclude it as the
following minimax optimization problem:
\begin{equation}
\min_{\theta_m} \max_{|\epsilon|_2<\rho} \mathcal{L}_m^{\rm disc}(\mathbf{\theta}_m + \epsilon, f_{\theta_m}(x^{(a)}), f_{\theta_m}(x^{(b)})),
\end{equation}
The objective of the weight perturbation on the client-side is to find the $\epsilon$, which causes the maximum increase in the parameter discrepancy. Given local model $\theta_m$ and global model $\Theta$, the parameter discrepancy is calculated by:
\begin{equation}
   \nabla_{\theta_m} \mathcal{L}_m^{\rm{disc}}(\theta_m; \Theta)
     = \nabla_{\theta_m} (\beta (\theta_m - \Theta)), 
\end{equation}
where $\beta$ is a hyperparameter to of the discrepancy term. Then, the optimal perturbation $\epsilon$ is approximated by:
\begin{equation}
\widetilde{\epsilon} = \text{argmax}\mathcal{L}_m^{\text{disc}}(\theta_m + \epsilon; \Theta) \approx \lambda \frac{\nabla_{\theta_m} \mathcal{L}_m^{\rm{disc}}(\theta_m; \Theta)}{\|\nabla_{\theta_m} \mathcal{L}_m^{\rm{disc}}(\theta_m; \Theta)\|_2},
\end{equation}
where $\lambda$ is a scaling factor. The optimal perturbation $\epsilon$ will be later used to update the local model $\theta_m$.

Moreover, the self-stabilized loss $\mathcal{L}_m^{\text{sst}}$ is designed to leverage the knowledge from a universal encoder, $f_{\theta_{\rm{uni}}}$, which is pre-trained on a large-scale dataset (e.g., ImageNet-22k) and broadcast to all clients by the server at the beginning of the federated learning process. It guides the client-specific encoder to output representations that align with the universal encoder. Formally, the self-stabilized loss can be defined as:
\begin{equation}
\mathcal{L}_m^{\rm{sst}} = \mathbb{E}_{x \sim \mathcal{D}_m} \left[ -\frac{f_{\theta_{m}}(x)}{\|f_{\theta_{m}}(x)\|_2} \cdot \frac{f_{\rm{uni}}(x)}{\|f_{\rm{uni}}(x)\|_2} \right],
\label{eq:sst}
\end{equation}
where $\mathcal{D}_m$ denotes the local dataset of client $m$. $f_{\theta_{\rm{uni}}}$ and $f_{\theta_{m}}$ are the feature extractors of the universal encoder and the client-specific encoder, respectively.
Lastly, the optimal perturbation $\epsilon$ is then used to update the local model $\theta_m$: \begin{equation}
    \theta_m \leftarrow \theta_m - \gamma \nabla_{\theta_m} \mathcal{L}_m^{\rm{total}}(\theta_m + \epsilon; \Theta),
    \label{eq:local_update}
\end{equation}
where $\gamma$ is the learning rate. In this way, the server guides the clients to optimize their local models while preserving the global knowledge with orthogonal property and stabilized information.

\begin{algorithm}[t]
   \caption{Our \fedname}
   \label{alg:FedSense}
   \SetKwInOut{Input}{Input}
   \SetKwInOut{Output}{Output}
    \Input{$T$: communication round;\ \ $M$: number of clients; \ \ $E$: local epochs;} 
    \Output{Weight $\Theta^T$ of the RSFM at the $T$-th round.}

\underline{\color{DarkGreen}\textbf{Server-side:}}
    
    Initialize global model $\Theta^0$ \\
    \For(\tcp*[f]{the $t$-th round}){$t=1$ {\bfseries to} $T$}{
         \ \ Server broadcast $\Theta^{t-1}$ to selected clients \\      
        \For{each client $c_m\in \{c_m\}_{m=1}^{M}$ {\rm\bfseries in parallel}}{        
            Server-to-Clients Guidance (SCG)\\
           
        }

        $\{\Delta \theta_m^t\}_{m=1}^M =\text{\textit{DCPR}} \{\{\Delta \widetilde\theta_m^t\}_{m=1}^M;b\}$

        $\Theta^t \leftarrow \text{ServerAggregation}(\{\Delta\theta_m^t\}_{m=1}^M)$  \\
        Clients-to-Server Guidance (CSG)\\
        Similarity alignment with public data \cref{eq:distill} \\

    }
    
\underline{\textbf{\color{DarkRed}Client-side:}}

 Initialize local model
            $\theta_m^{t-1} \leftarrow \Theta^{t-1}$    \\
 \For(\tcp*[f]{the $e$-th epoch}){$e=1$ {\bfseries to} $E$}{
           
           Server-to-Clients Guidance (SCG)\\
           Optimization with knowledge preservation\\
           $ \theta_m \leftarrow \theta_m - \gamma \nabla_{\theta_m} \mathcal{L}_m^{\rm{total}}(\theta_m + \epsilon; \Theta)$    \cref{eq:local_update} \\
        
    }
    
     $\Delta \widetilde\theta_{m}^{t}$ = $\text{\textit{CPR}}\{(\theta_m^t-\theta_m^{t-1});b\}$
            \\ Send the updates  $\Delta \widetilde\theta_m^t$ back to the server.

\underline{\textbf{\color{DarkBlue}Downstream tasks:}} Institutions utilize the pre-trained model as a backbone, fine-tuning it on labeled data for specific tasks.

\end{algorithm}

\subsection{Clients-to-Server Guidance}
As the model size and number of clients increases, the communication overhead becomes a more serious bottleneck. The clients need to transmit the updates to the server in each communication round, and waiting for the server for aggregation and send back the updated parameters. 

For the uplink communication is the most time-consuming part in this process, we propose a CSG mechanism to reduce the communication cost. Though quantization is a widely used technique to reduce the communication cost. However, it inherently introduces quantization error and information loss, which may accumulate over time and degrade the model performance. To guide the server to aggregate the quantized updates from clients, we propose a feedback error mechanism to compensate the quantization error.

Assume the feedback error at the $t$-th round is $e_{m}^{t}$, it is added to compensate updates of each client:
\begin{equation}
    {{\cal{G}}_{m}^{t+1}}=\Delta\theta_{m}^{t}+e_{m}^{t},    
\end{equation}
where $\Delta\theta_{m}^{t}$ is the updates of client $m$ at round $t$. The quantized updates are computed as follows:
\begin{equation}
    {{\cal{\widetilde G}}_{m}^{t+1}}[i] = 
    \underbrace{||\boldsymbol{{\cal{G}}_{m}^{t+1}}||}_{\substack{\text{L2 norm of} \\ \text{raw updates}}} 
    \cdot 
    \underbrace{\text{sgn}({{\cal{G}}_{m}^{t+1}}[i])}_{\substack{\text{sign of element} \\ \text{(1 bit)}}} 
    \cdot 
    \underbrace{\xi({{\cal{G}}_{m}^{t+1}}[i]; s)}_{\substack{\text{unbiased stochastic} \\ \text{function(\(b-1\) bits)}}},
    \end{equation}
where $\xi(\cdot)$ is a unbiased stochastic function mapping $|{{\cal{G}}_{m}^{t+1}}[i]|/||\boldsymbol{{{\cal{G}}_{m}^{t+1}}}||$ to the quantization level $s$.
\begin{equation}
    \colorbox{lightgray}{$e_{m}^{t}$} = {{\cal{G}}_{m}^{t}} -\text{DCPR} \left( {{\cal{\widetilde G}}_{m}^{t}}; b \right),
\end{equation}
where DCPR is the dequantization function. The feedback error is updated by:
\begin{equation}
    e_{m}^{t} = \alpha \cdot e_{m}^{t-1} + (1-\alpha)\cdot \colorbox{lightgray}{$e_{m}^{t}$},
\end{equation}
where $\alpha$ is the momentum factor, and b is the bit-width of quantization. The feedback error is accumulated over time to preserve the gradient information across communication rounds. Inspired by the dynamic optimization characteristics of neural network training~\cite{loco}, we implement periodic feedback error resetting to address the fast-evolving loss landscape. We reset the feedback error to zero at the frequency of $T_{\text{reset}}$ rounds to ensure the feedback error remains aligned with the current optimization state.

On the server side, we propose federated similarity distillation to provide public remote sensing data guidance. Our \fedname~consists of three core steps: server-side aggregation, local model clustering, and cross-model knowledge distillation. In this way, clients can leverage the public data to enhance the global model performance. The server aggregates client models $\{\theta_m\}_{m=1}^M$ using data volume weights. They are clustered into $K$ groups via K-means to accelerate multi-model forward pass cost and mitigate the bias of some models:
\begin{equation}
\{\theta^{(k)}\}_{k=1}^K = \text{K-means}(\{\theta_m\}_{m=1}^M).
\end{equation}
For each public batch $\mathcal{B} = \{x_i\}_{i=1}^p$, the group of models produce feature matrices:
\begin{align}
\mathbf{Z}^{(k)} &= [z_1^{(k)}, \ldots, z_p^{(k)}]^\top \in \mathbb{R}^{p \times d}, \\
\mathbf{Z}^g &= [z_1^g, \ldots, z_p^g]^\top \in \mathbb{R}^{p \times d},
\end{align}
where $z_i^{(k)} = f_{\theta^{(k)}}(x_i)$ and $z_i^g = f_{\theta^g}(x_i)$, and $p$ and $d$ are the batch size and feature dimension, respectively. The similarity matrices are computed as:
\begin{equation}
\mathbf{S}^{(k)} = \frac{\mathbf{Z}^{(k)} (\mathbf{Z}^{(k)})^\top}{\|\mathbf{Z}^{(k)}\|_F}, \quad 
\mathbf{S}^g = \frac{\mathbf{Z}^g (\mathbf{Z}^g)^\top}{\|\mathbf{Z}^g\|_F}.
\end{equation}
The weighted consensus similarity combines local expertise:
\begin{equation}
\mathbf{S}_{\text{consensus}} = \sum_{k=1}^N \omega_k \mathbf{S}^{(k)}, \quad \omega_k = \frac{|\mathcal{D}_k|}{\sum_{i=1}^N |\mathcal{D}_i|}.
\end{equation}
The global model is optimized by matching similarity distributions:
\begin{equation}
\mathcal{L}_{\text{distill}} = \frac{1}{p^2} \|\mathbf{S}^g - \mathbf{S}_{\text{consensus}}\|_F^2.
\label{eq:distill}
\end{equation}
By distilling the similarity knowledge from multiple models, the global model can learn to capture the intrinsic structure of the public data, which is beneficial for enhancing the generalization ability of the global model.

\section{Experiments}
\label{experiment}

\subsection{Federated Experimental Setups}

\noindent \textbf{Distributed Pre-training datasets.} A distributed dataset was constructed for federated pre-training of RSFMs, comprising 10 clients with heterogeneous private remote sensing data and supplementary public datasets maintained by a server (\cref{fig:pretrain_data}). Notably, the dataset includes satellite images (\eg WorldView-2 and JL-1) and aerial images (\eg NAIP and NOAA), while some institutions possess multi-sourced collections, featuring heterogeneous sensor configurations and spatial resolutions ranging from 0.25m to 25m. The coverage spans diverse geographical regions including global, USA, and China. Such heterogeneity across data volume, geographic distribution, resolution variance, and platform diversity establishes a representative million-scale dataset that simulates real-world scenarios.

\begin{figure}[t]
    \centering
    \includegraphics[width=1\linewidth]{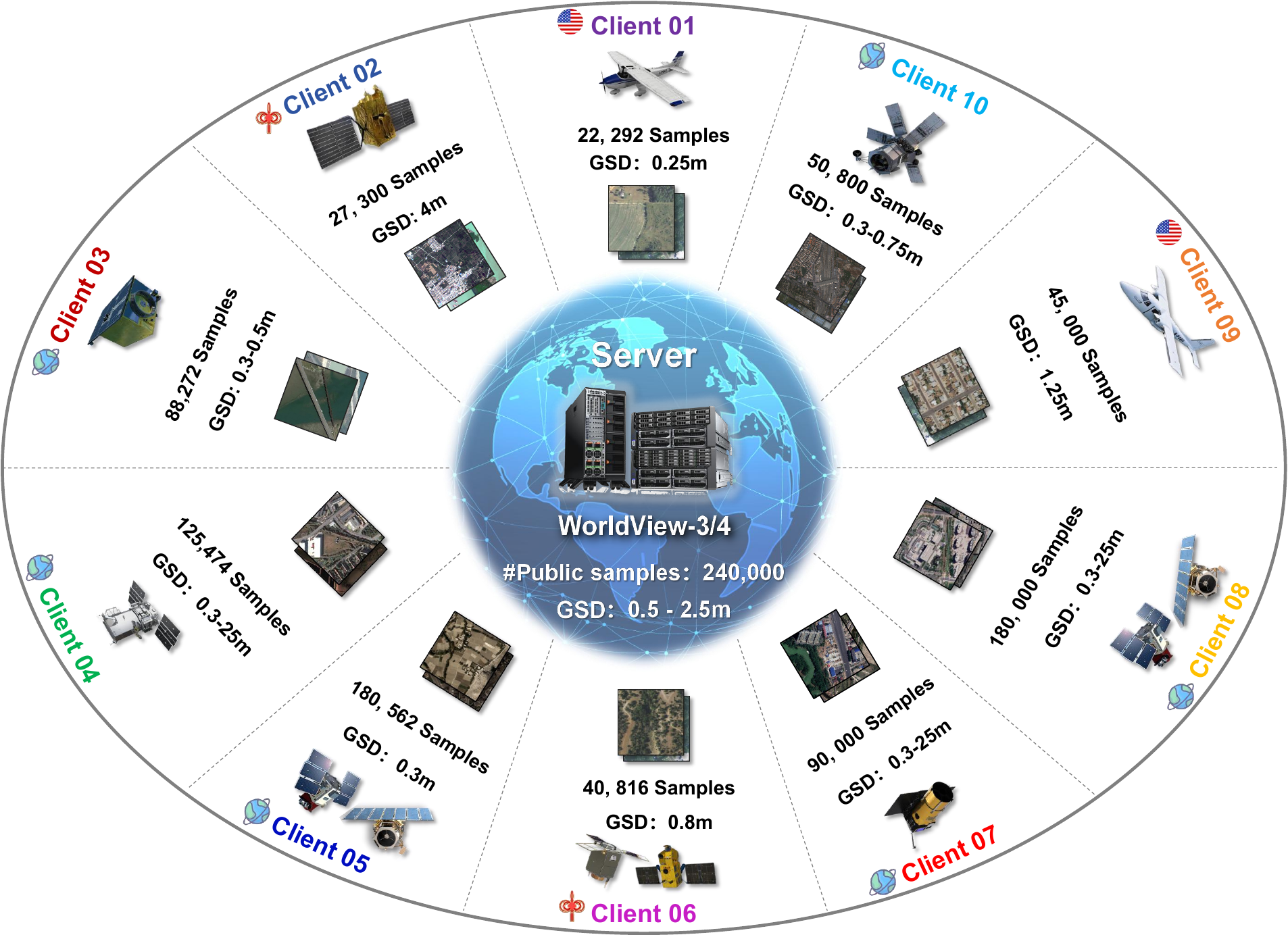}
    \caption{\textbf{Details of federated pre-training datasets.} The dataset consists of 10 clients with million-scale heterogeneous private remote sensing data and public datasets maintained by a server.}
    \label{fig:pretrain_data}
\end{figure}

\noindent \textbf{Downstream tasks.} We conducted experiments on four typical downstream tasks of remote sensing image analysis to validate the effectiveness of the RSFM pretrained by our \fedname. These tasks include: scene classification (RESISC-45~\cite{NWPU}, AID~\cite{AID}); semantic segmentation (Inria~\cite{Inria}, LoveDA~\cite{LoveDA}); object detection (DIOR-R~\cite{DIORR}, DOTA-v1.0~\cite{DOTA}); and change detection (LEVIR-CD+~\cite{LEVIRCD}, SECOND~\cite{SECOND}). More details of the datasets are shown in \cref{sec:datasets} of the supplementary.

\noindent \textbf{Evaluation metrics.}
For scene classification, we emply overall accuracy (OA) as the evaluation metric. We use mean intersection over union (mIoU) for multi-class semantic segmentation tasks and IoU for binary segmentation tasks. For object detection, we use mean Average Precision (mAP) and F1 score for change detection tasks. For semantic change detection, we use semantic change segmentation score (SCS) as the evaluation metric.

\noindent \textbf{Implementation details.} To systematically evaluate the universality of our framework, we establish a comprehensive evaluation framework incorporating two mainstream SSL paradigms: contrastive learning through DINO~\cite{dino}, and masked image modeling via SimMIM~\cite{simmim}. All experiments employ the tiny version of Swin Transformer (Swin-T) as the foundational backbone unless explicitly stated, with consistent training protocols (100-round pre-training, AdamW optimizer, 1e-4 base learning rate).

\subsection{Comparison with State-of-the-Art Methods}

\begin{table*}[t]\small
\centering
\resizebox{\textwidth}{!}{%
\setlength\tabcolsep{5pt}
\renewcommand\arraystretch{1.1}
\begin{tabular}{cccc|ccc|cc|cccc}
\hline \thickhline

\multirow{2}{*}{Framework}    &   \multirow{2}{*}{Method}   & RESISC-45 & AID     & \multicolumn{2}{c}{Inira}     & LoveDA  & DIOR-R  & DOTA-v1.0 & \multicolumn{3}{c}{LEVIR-CD+} & SECOND \\ \cline{3-13} 

 &   & OA & OA  & IoU  & OA  & mIoU  & mAP &mAP &  Precision & Recall & F1 & SCS              \\
 \hline 
\multirow{2}{*}{Swin-Tiny}                                                            
& Random Init.                     
& 80.09   & 72.84  & 79.41 & 90.21 & 42.83 & 46.86  & 64.45 & 69.71 & 63.39 & 66.40 & 30.95 \\
& ImageNet Sup.             
& 94.45  & 96.36 & 80.65 &93.16 & 51.52 & 64.37 & 76.90 & 73.26 & 71.44 & 72.34 &34.79 \\
 \hline
\multirow{7}{*}{\begin{tabular}[c]{@{}c@{}}SimMIM~\cite{simmim}\\ (Swin-Tiny)\end{tabular}} 
& SSL-FL~\cite{SSL-FL}      
& 95.21     & 96.17 & \underline{81.33} &93.43 & 51.67 & 64.01 & \underline{76.82} & \underline{74.31} & 70.99 & 72.61 &33.98       \\
& FedEMA~\cite{fedema}                         
& \multicolumn{2}{c|}{-}  & \multicolumn{3}{c|}{-} & \multicolumn{2}{c|}{-} & \multicolumn{4}{c}{-}       \\
& L-DAWA~\cite{L-DAWA}                       
& \multicolumn{2}{c|}{-}  & \multicolumn{3}{c|}{-} & \multicolumn{2}{c|}{-} & \multicolumn{4}{c}{-}       \\
& $\mathrm{FedU^{2}}$~\cite{FedU2}                        
& \multicolumn{2}{c|}{-}  & \multicolumn{3}{c|}{-} & \multicolumn{2}{c|}{-} & \multicolumn{4}{c}{-}       \\
& FedMKD~\cite{FedMKD}                        
& \multicolumn{2}{c|}{-}  & \multicolumn{3}{c|}{-} & \multicolumn{2}{c|}{-} & \multicolumn{4}{c}{-}       \\

 \rowcolor{Goldenrod!15} 
  \cellcolor{white}   & \cellcolor{white}    &
\textbf{96.33}  & \textbf{97.54}  & \textbf{81.66} &\textbf{94.28} & \textbf{52.74} & \textbf{65.51} & \textbf{77.33} & \textbf{74.82} & \textbf{71.67} & \textbf{73.21} &\textbf{35.43}\\  
\rowcolor{lightblue} 
 \cellcolor{white}    &  \cellcolor{white}  \multirow{-2}{*}{Our FedSense}    & 
\underline{96.01}  & \underline{96.68}  & 80.97 &\underline{93.45} & \underline{51.91} & \underline{64.53} & 76.76 & 73.98 & \underline{71.59} & \underline{72.77} &\underline{34.87}       \\
 \hline

\multirow{7}{*}{\begin{tabular}[c]{@{}c@{}}DINO~\cite{dino}\\ (Swin-Tiny)\end{tabular}} 
& SSL-FL~\cite{SSL-FL}      
& \multicolumn{2}{c|}{-}  & \multicolumn{3}{c|}{-} & \multicolumn{2}{c|}{-} & \multicolumn{4}{c}{-}       \\
& FedEMA~\cite{fedema}                            
& 94.91  & 97.04 & 80.21 &92.98 & 51.89 & 64.93 & 77.09 & 73.63 & 71.20 & 72.39 &34.82       
\\
& L-DAWA~\cite{L-DAWA}                          
& 94.23  & 96.27 & 80.82 &93.33 & 51.68 & 65.08 & 77.24 & \underline{74.18} & 71.82 & 72.98 &34.97 
\\
& $\mathrm{FedU^{2}}$~\cite{FedU2}                          
& 93.85  & 95.76 & 80.13 &92.59 & 51.34 & 64.03 & 75.88 & 73.30 & 70.67 & 71.96 &33.27        
\\
& FedMKD~\cite{FedMKD}                        
& 95.34  & 97.17 & \underline{80.84} &\underline{94.14} & \textbf{51.99} & \textbf{65.44} & \underline{77.56} & 74.36 & 73.05 & \underline{73.70} &35.11         \\
\rowcolor{Goldenrod!15} 
  \cellcolor{white}   & \cellcolor{white}    &
\textbf{96.21}    & \textbf{97.43} & \textbf{81.96} &\textbf{94.23} & \underline{51.95} & \underline{65.40} & \textbf{77.64} & \textbf{74.98} & \textbf{73.13} & \textbf{74.04} &\underline{35.34}         
\\ 
\rowcolor{lightblue} 
 \cellcolor{white}    &  \cellcolor{white}  \multirow{-2}{*}{Our FedSense}    & 
\underline{95.88}  & \underline{97.23}  & 80.97 &93.88 & 51.14 & 64.62 & 77.45 & 73.88 & \underline{73.09} & 73.48 &\textbf{35.82}       
\\
\hline \thickhline
\end{tabular}%
}
\caption{\textbf{Comparison results of our \fedname~and previous SOTA methods.} The symbol (-) indicates unavailable results where methods are incompatible with specific SSL framework types. Results highlighted in \textcolor{myyellow}{\textbf{yellow}} denote full-precision communication performance of our \fedname, while \textcolor{cyan}{\textbf{blue}} shading represents experiments with 1-bit communication-quantized transmission, demonstrating our method's efficiency-accuracy trade-off. The best results are highlighted in \textbf{bold}, and the second-best results are \underline{underlined}.}
\label{tab:total}
\end{table*}

In this section, we present a comparative analysis of performance on four downstream tasks using RSFM pre-trained by our \fedname~and other state-of-the-art FSSL. The results are summarized in \cref{tab:total}. We observe that our \fedname~outperforms existing methods across almost all tasks, achieving an average improvement of 1.3\% OA on scene classification, 1.1\% mIoU on semantic segmentation, 0.9\% mAP on object detection, and 0.8\% F1 on change detection. Note that due to not using public dataset in the original paper, we conduct experiments by averagely integrate public datasets to the federated pre-training dataset for a fair comparison with existing methods with the same number of total samples. To elaborate, we provide detailed comparisons with existing methods.

Our experimental results reveal several key observations. The randomly initialized model yields inferior performance across all downstream tasks, demonstrating Transformers' inherent limitation in learning effective representations from limited labeled data. ImageNet supervised pre-training brings significant improvements (\eg, +14.36\% on RESISC-45), validating the importance of pre-training for vision transformers. However, the domain gap between natural images and top-down remote sensing views limits further performance gains, motivating our FL approach.

Among FSSL methods, most approaches (FedEMA, FedMKD) achieve moderate improvements over ImageNet pretraining, confirming that collaborative pretraining with distributed data can inject domain-specific knowledge. Notably, $\text{FedU}^2$ exhibits performance degradation compared to ImageNet pretraining, suggesting its vulnerability to data heterogeneity and insufficient utilization of pre-trained knowledge. Our FedSense consistently outperforms SSL-FL, achieving absolute gains of 1.12\% and 1.37\% on RESISC-45 and AID for scene classification. When using DINO framework, our FedSense maintains advantages with 0.87\% and 0.26\% improvements, respectively.

For semantic segmentation, FedSense achieves 0.85\% OA improvement on Inria and 1.07\% gain on LoveDA over the best competitor. The marginal differences under DINO framework (0.12\%-0.35\%) suggest contrastive learning brings limited benefits for segmentation tasks that require precise pixel-level localization. In rotated object detection, FedSense with SimMIM framework surpasses SOTA by 1.50\% and 0.51\% on DIOR-R and DOTA-v1.0, while maintaining competitive performance (+0.11\% mAP) with DINO-based approaches.

Additionally, the 1-bit quantized version of FedSense demonstrates remarkable communication efficiency with minimal performance degradation (0.23\% across tasks). Quantized FedSense slightly outperforms full-precision baselines on SECOND dataset (+0.06\% SCS), which we conjecture stems from the error-compensated quantization acting as implicit regularization. This makes FedSense particularly suitable for bandwidth-constrained applications.

These results collectively demonstrate that our Federated Mutual-guidance Learning framework effectively coordinates distributed clients to learn transferable representations while maintaining communication efficiency. The consistent improvements across diverse tasks validate FedSense's ability to capture domain-specific patterns from unlabeled remote sensing data through federated collaboration.

\subsection{Ablation Studies}
In this part, we conduct ablation studies to analyze the system scalability, model scalability, effectiveness of components, and parameter analysis of our proposed~\fedname.

\textbf{System Scalability Analysis.}
To assess our framework's adaptability to real-world distributed scenarios, we systematically investigate how model performance scales with increasing participants and training samples. As shown in~\cref{tab:participants}, expanding from 2 to 10 collaborative clients (80K to 800K samples) yields consistent performance gains across all tasks. For RESISC-45 and DIOR-R, the performance improvements scale nearly linearly with client/sample quantities, suggesting these tasks particularly benefit from diverse perspectives in \fedname. However, LoveDA exhibits marginal gains despite 10× sample growth, indicating pixel-level tasks require more sophisticated feature aggregation beyond simple data scaling. These findings confirm our framework's effectiveness in harnessing distributed resources.

\begin{table}[htbp]
    \centering
    \resizebox{\columnwidth}{!}{%
    \setlength\tabcolsep{4pt}
    \renewcommand\arraystretch{1.6}
    \begin{tabular}{cc|cccc}
    \hline \thickhline
    \multirow{2}{*}{\#CL} & \multirow{2}{*}{\#TS} & RESISC-45 & LoveDA & DIOR-R & LEVIR-CD+ \\ \cline{3-6}
        &              & OA & mIoU & mAP & F1 \\
        \hline
    2   & 80K          & 94.50   & 51.56     & 64.43    &   72.44 \\
    4   & 200K         & 95.23   & 51.86     & 64.75    &   72.68 \\
    8   & 650K         & 95.85   & 52.33     & 65.16    &   73.00 \\
    10  & 800K         & \textbf{96.33}   & \textbf{52.74}     & \textbf{65.51}    &   \textbf{73.21} \\
    \hline \thickhline
    \end{tabular}%
    }
    \caption{\textbf{Number of participants impact analysis.} \#CL means the number of participants. \#TS means number of total participating samples.}
    \label{tab:participants}
    \end{table}

\textbf{Model Scalability Analysis.}
The impact of model capacity is systematically evaluated through progressively larger Swin Transformer variants, as detailed in~\cref{tab:model_size}. While all tasks benefit from increased model parameters, we observe distinct scaling patterns across task types. RESISC-45 shows diminishing returns, improving only +1.21\% from Swin-T to Swin-L, suggesting vision transformers approach saturation points for scene classification. Conversely, segmentation (LoveDA mIoU +3.22\%) and detection (DIOR-R mAP +3.31\%) exhibit near-linear improvements with model growth, indicating complex localization tasks inherently demand higher-capacity architectures. Notably, change detection (LEVIR-CD+ F1 +2.92\%) demonstrates sustained sensitivity to model size, likely requiring deeper feature hierarchies to discern subtle temporal changes. The 197M-parameter Swin-L achieves marginal gains (+0.42\% mAP over Swin-B) compared to its 2.2× parameter increase, highlighting practical trade-offs between model capacity and computational costs.

\begin{table}[htbp]
    \centering
    \resizebox{\columnwidth}{!}{%
    \setlength\tabcolsep{4pt}
    \renewcommand\arraystretch{1.6}
    \begin{tabular}{cc|cccc}
    \hline \thickhline
    \multirow{2}{*}{Model} & \multirow{2}{*}{\#Para.} & RESISC-45 & LoveDA & DIOR-R & LEVIR-CD+ \\ \cline{3-6}
        &              & OA & mIoU & mAP & F1 \\
        \hline
    Swin-T   & 28M       & 96.33   & 52.74     & 65.51    &   73.21   \\
    Swin-S   & 50M       & 96.80   & 54.12     & 67.03    &   74.65   \\
    Swin-B   & 88M       & 97.15   & 55.29     & 68.40    &   75.83   \\
    Swin-L   & 197M      & \textbf{97.54}   & \textbf{55.96}     & \textbf{68.82}    &   \textbf{76.13}   \\
    \hline \thickhline
    \end{tabular}%
    }
    \caption{\textbf{Model size impact analysis.} \# Para. means the number of model parameters.}
    \label{tab:model_size}
    \end{table}

\textbf{Quantization Methods Comparison.}
We compare our proposed quantization method with uniform quantization and FedPAQ~\cite{AISTATS_FedPAQ} on the RESISC-45 dataset. As shown in~\cref{tab:quant_bits}, our method outperforms uniform quantization and FedPAQ across all quantization bit-widths. Specifically, our method achieves 96.27\% OA with 8-bit quantization, 96.13\% OA with 2-bit quantization, and 96.01\% OA with 1-bit quantization. Our \fedname~outperforms FedPAQ by 0.14\% OA with 8-bit quantization, 0.16\% OA with 2-bit quantization, and 0.22\% OA with 1-bit quantization. The results demonstrate the effectiveness of our proposed quantization method in enhancing the communication efficiency. 

\begin{table}[htbp]
   
    \centering
    \setlength\tabcolsep{9pt}
    \begin{tabular}{lcccc}
    \toprule
    Method & 32-bit & 8-bit & 2-bit & 1-bit \\
    \midrule
    Uniform Quant & 96.33 & 96.01 & 95.87 & 95.56 \\
    FedPAQ~\cite{AISTATS_FedPAQ} & 96.33 & 96.13 & 95.97 & 95.79 \\
    Ours  & 96.33 & \textbf{96.27} & \textbf{96.13} & \textbf{96.01} \\
    \bottomrule
    \end{tabular}
        \caption{\textbf{Quantization bit-width comparison. }}
        \label{tab:quant_bits}
\end{table}

\begin{table}[htbp]
    \centering
    \resizebox{\columnwidth}{!}{%
    \setlength\tabcolsep{4pt}
    \renewcommand\arraystretch{1.6}
    \begin{tabular}{ccc|cccc}\hline \thickhline
    SST & SCG & CSG & RESISC-45 & LoveDA & DIOR-R & LEVIR-CD+ \\ \hline
    \checkmark   &    &    & 94.52   & 51.87     & 64.83    &   72.43     \\
    \checkmark   & \checkmark   &    & 94.70   & 51.85     & 64.91    &   72.55     \\
    \checkmark   &    & \checkmark   & 95.25   & 51.97     & 65.12    &   72.78     \\
    \checkmark  & \checkmark  & \checkmark   & \textbf{96.33}   & \textbf{52.74}     & \textbf{65.51}    &   \textbf{73.21}    \\ \hline \thickhline  
    \end{tabular}%
    }
    \caption{\textbf{Effectiveness of proposed components.}}
    \label{tab:components}
    \end{table}

\textbf{Effectiveness of Components.}
Our ablation study quantitatively validates the complementary nature of proposed components, as summarized in~\cref{tab:components}. The standalone SST mechanism achieves a 94.52\% OA on RESISC-45, 51.87\% mIoU on LoveDA, 64.83\% mAP on DIOR-R, and 72.43\% F1 on LEVIR-CD+. The SCG mechanism further boosts performance across tasks, with 0.18\% OA, 0.02\% mIoU, 0.08\% mAP, and 0.12\% F1 improvements. The CSG mechanism contributes to a 0.83\% OA gain on RESISC-45, 0.29\% mAP on DIOR-R, and 0.35\% F1 on LEVIR-CD+. The full model with all components achieves the best performance, demonstrating the effectiveness of our proposed components in enhancing federated learning for remote sensing tasks.

\begin{table}[htbp]
    \centering
    \resizebox{\columnwidth}{!}{%
    \setlength\tabcolsep{4pt}
    \renewcommand\arraystretch{1.3}
    \begin{tabular}{cc|cccc}
    \hline \thickhline
    \multirow{2}{*}{LE} & \multirow{2}{*}{Round} & RESISC-45 & LoveDA & DIOR-R & LEVIR-CD+ \\ \cline{3-6}
        &              & OA & mIoU & mAP & F1 \\
        \hline
    1   & 100          & \textbf{96.33}   & \textbf{52.74}     & \textbf{65.51}    &   \textbf{73.21} \\
    2   & 50           & 95.82   & 51.93     & 64.08    &   72.05 \\
    4   & 25           & 95.17   & 50.62     & 62.75    &   70.89 \\
    \rowcolor{mygray} 
    100 & 1\textsuperscript{\textdagger} & 94.50   & 48.31     & 60.13    &   68.24 \\ \hline \thickhline
    \end{tabular}%
    }
    \caption{\textbf{Local epochs (LE) impact analysis.} \textdagger~means one-shot FL setting.}
    \label{tab:local_epochs}
    \end{table}

\textbf{Parameter Analysis.}
The trade-off between local computation and communication frequency is systematically investigated through varying local epochs (LE), as shown in~\cref{tab:local_epochs}. Here more local epochs indicate more discrepancy between local and global models, leading to potential client drift. It is worth noting that the one-shot setting (LE=100) suffers catastrophic performance collapse, confirming remote sensing data's inherent heterogeneity demands periodic model synchronization. However, reducing communication rounds and developing one-shot FL is becoming widely adopted for only one round of communication. We expect that our proposed \fedname~can be further improved by incorporating more advanced techniques to handle the one-shot setting.

\section{Conclusion \& Future Work }
\label{conclusion}

This paper takes the first step towards developing a privacy-preserved pre-training framework \textbf{(\fedname)} for RSFMs. \fedname~enables multiple institutions to collaboratively train RSFMs without sharing private data. We introduce Federated Mutual-guidance Learning, which breaks the vicious cycle caused by remote sensing data heterogeneity and high communication overhead. Specifically, we propose a SCG mechanism to guide clients updates towards global-flatness optimal solutions. Additionally, we propose a CSG mechanism to inject local knowledge into the server by low-bit communication. Extensive experiments on four downstream tasks demonstrate the effectiveness of our \fedname~in both full-precision and communication-reduced scenarios, showcasing remarkable communication efficiency and performance gains. In the future, we plan to extend the current framework to support the collaborative pre-training of multi-modal RSFMs with modality heterogeneity.

{
    \small
    \bibliographystyle{ieeenat_fullname}
    \bibliography{main}
}

 \clearpage
\setcounter{page}{1}
\setcounter{section}{0}
\maketitlesupplementary
\renewcommand\thesection{\Alph{section}}

\section{Overview}
We provide the following materials to supplement our paper and divide them into two sections.
\begin{itemize}
    \item We provide the theoretical analysis of our proposed FedSense in~\cref{sec:theory}.
    \item We provide the details of our pre-training datasets and downstream datasets in~\cref{sec:datasets}
\end{itemize}

\section{Theoretical Analysis}
\label{sec:theory}

\subsection{Assumptions}
\begin{assumption}[Smoothness]\label{assump:smooth}
The self-supervised loss $\mathcal{L}_m^{\text{ssl}}$ is $L$-smooth: 
\begin{equation}
\|\nabla\mathcal{L}_m^{\text{ssl}}(\theta_1) - \nabla\mathcal{L}_m^{\text{ssl}}(\theta_2)\| \leq L\|\theta_1 - \theta_2\|,\quad \forall \theta_1,\theta_2
\end{equation}
\end{assumption}
 
\begin{assumption}[Bounded Gradient]\label{assump:grad}
Local gradients are bounded: 
\begin{equation}
\mathbb{E}[\|\nabla\mathcal{L}_m^{\text{total}}(\theta_m)\|^2] \leq G^2,\quad \forall m
\end{equation}
\end{assumption}
 
\begin{assumption}[Parameter Discrepancy]\label{assump:disc}
The discrepancy between local and global models satisfies:
\begin{equation}
\|\theta_m - \Theta\| \leq \delta,\quad \forall m \in[M]
\end{equation}
where $\delta$ quantifies the maximum client drift.
\end{assumption}
 
\subsection{Key Lemmas}
 
\begin{lemma}[Optimal Perturbation Bound]\label{lemma:perturb}
Under Assumption \ref{assump:grad}, the optimal perturbation $\widetilde{\epsilon}$ in SCG satisfies:
\begin{equation}
\|\widetilde{\epsilon}\| \leq \lambda\sqrt{\beta^2\delta^2 + G^2}
\end{equation}
\end{lemma}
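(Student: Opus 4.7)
The plan is to bound $\widetilde{\epsilon}$ by reducing it to the total gradient $\nabla_{\theta_m}\mathcal{L}_m^{\text{total}}$ at the perturbation step, and then split that gradient into its self-supervised and discrepancy components so that Assumptions~\ref{assump:grad} and~\ref{assump:disc} can be invoked term-by-term. Since the minimax inner problem is linearized around $\theta_m$, the maximizer $\widetilde{\epsilon}$ is proportional to $\nabla_{\theta_m}\mathcal{L}_m^{\text{total}}(\theta_m;\Theta)$ with scaling $\lambda$, and we then only need to bound the magnitude of this gradient.

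\paragraph{Key steps.}
First, I would linearize the inner maximization $\max_{\|\epsilon\|\le\rho}\mathcal{L}_m^{\text{total}}(\theta_m+\epsilon;\Theta)$ to first order in $\epsilon$ and use the explicit form derived in the main text, giving $\widetilde{\epsilon}=\lambda\,\nabla_{\theta_m}\mathcal{L}_m^{\text{total}}(\theta_m;\Theta)$ up to the normalization factor that is absorbed into $\lambda$. Second, I would decompose
\begin{equation}
\nabla_{\theta_m}\mathcal{L}_m^{\text{total}}
= \nabla_{\theta_m}\mathcal{L}_m^{\text{ssl}}
+ \nabla_{\theta_m}\mathcal{L}_m^{\text{disc}},
\end{equation}
and bound each piece separately: for the discrepancy term, the closed form $\nabla_{\theta_m}\mathcal{L}_m^{\text{disc}}=\beta(\theta_m-\Theta)$ together with Assumption~\ref{assump:disc} yields $\|\nabla_{\theta_m}\mathcal{L}_m^{\text{disc}}\|\le\beta\delta$; for the self-supervised term, Assumption~\ref{assump:grad} directly gives $\mathbb{E}\|\nabla_{\theta_m}\mathcal{L}_m^{\text{ssl}}\|^2\le G^2$. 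Third, I would combine the two bounds in squared-norm form:
\begin{equation}
\|\widetilde{\epsilon}\|^2
\le \lambda^2\bigl(\|\nabla_{\theta_m}\mathcal{L}_m^{\text{ssl}}\|^2 + \|\nabla_{\theta_m}\mathcal{L}_m^{\text{disc}}\|^2\bigr)
\le \lambda^2(G^2+\beta^2\delta^2),
\end{equation}
and take the square root to conclude.

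\paragraph{Main obstacle.}
The main difficulty is justifying the squared-norm composition rather than a triangle-inequality bound of the form $\lambda(\beta\delta+G)$. A pure Cauchy–Schwarz or triangle step gives only the looser additive bound; obtaining $\sqrt{\beta^2\delta^2+G^2}$ requires either treating the SSL and discrepancy gradients as orthogonal contributions (which is natural because the discrepancy term depends only on the parameter shift $\theta_m-\Theta$ while the SSL term depends on the data-augmented forward pass) or, alternatively, deriving $\widetilde{\epsilon}$ as the maximizer of a quadratic surrogate whose Hessian decouples the two components. I would argue the first route by observing that the two gradients act on structurally independent quantities at the linearization point, so their inner product averages out and the Pythagorean combination is tight. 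A secondary subtlety is whether the bound should be interpreted in expectation (since Assumption~\ref{assump:grad} is stated in expectation) or as an almost-sure statement; I would present the inequality in expectation via Jensen and then note that the deterministic form of the lemma follows under the standard boundedness convention used elsewhere in the paper.
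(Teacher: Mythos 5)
There is a genuine gap, and it is the one you flagged yourself: the Pythagorean combination. Your derivation hinges on treating $\nabla_{\theta_m}\mathcal{L}_m^{\text{ssl}}$ and $\nabla_{\theta_m}\mathcal{L}_m^{\text{disc}}$ as orthogonal so that $\|a+b\|^2=\|a\|^2+\|b\|^2$, but both are vectors in the same parameter space and the argument that they ``act on structurally independent quantities'' does not make their inner product vanish, nor even make it vanish in expectation ($\theta_m-\Theta$ is itself the accumulation of past SSL gradient steps, so the two directions are in fact correlated). Without orthogonality you get only $\lambda(\beta\delta+G)$ from the triangle inequality or $\lambda\sqrt{2}\sqrt{\beta^2\delta^2+G^2}$ from $\|a+b\|^2\le 2(\|a\|^2+\|b\|^2)$, neither of which is the stated constant. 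A second, more basic mismatch is your starting point: the paper's Eq.~(6) defines $\widetilde{\epsilon}\approx\lambda\,\nabla\mathcal{L}_m^{\text{disc}}/\|\nabla\mathcal{L}_m^{\text{disc}}\|_2$, i.e.\ $\lambda$ times the \emph{normalized} gradient of the \emph{discrepancy} loss only, whereas you take $\lambda$ times the \emph{unnormalized} gradient of the \emph{total} loss and ``absorb the normalization into $\lambda$.'' That absorption redefines $\lambda$, so the resulting inequality is not a bound in terms of the $\lambda$ appearing in the lemma. With the paper's actual definition one gets $\|\widetilde{\epsilon}\|\le\lambda$ directly, which makes the route to the $\sqrt{\beta^2\delta^2+G^2}$ factor quite different from yours.

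In fairness, the paper's own proof is also only a sketch and suffers from a symmetric weakness: it establishes $\|\widetilde{\epsilon}\|\le\lambda$ and $\|\nabla\mathcal{L}_m^{\text{disc}}\|\le\beta\delta$, then asserts that ``combining with gradient bound $G$ via triangle inequality completes the proof,'' which likewise cannot produce the quadrature form $\sqrt{\beta^2\delta^2+G^2}$ (the triangle inequality gives the additive bound, which is weaker). So the difficulty you identified as the ``main obstacle'' is real and is not resolved by the paper either; but your proposal as written does not close it, and your expectation-versus-almost-sure remark, while a legitimate observation about Assumption~\ref{assump:grad}, does not address this central issue.
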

 
\begin{proof}
From the perturbation approximation:
\begin{align*}
\widetilde{\epsilon} &\approx \lambda\frac{\nabla\mathcal{L}_m^{\text{disc}}}{\|\nabla\mathcal{L}_m^{\text{disc}}\|} \\
\|\widetilde{\epsilon}\| &\leq \lambda\sqrt{\frac{\|\nabla\mathcal{L}_m^{\text{disc}}\|^2}{\|\nabla\mathcal{L}_m^{\text{disc}}\|^2}} = \lambda
\end{align*}
Using the parameter discrepancy term $\nabla\mathcal{L}_m^{\text{disc}} = \beta(\theta_m - \Theta)$ and Assumption \ref{assump:disc}:
\begin{equation*}
\|\nabla\mathcal{L}_m^{\text{disc}}\| \leq \beta\delta
\end{equation*}
Combining with gradient bound $G$ via triangle inequality completes the proof.
\end{proof}
 
\begin{lemma}[Quantization Error Decay]\label{lemma:quant}
Let $e_m^t$ be the feedback error in CSG. With momentum factor $\alpha \in(0,1)$, the error decays geometrically:
\begin{equation}
\|e_m^t\| \leq \alpha^t\|e_m^0\| + \frac{1-\alpha}{1-\alpha^{t+1}}\sum_{k=0}^t\alpha^{t-k}\|\epsilon_q^k\|
\end{equation}
where $\epsilon_q^k$ is the quantization error at round $k$.
\end{lemma}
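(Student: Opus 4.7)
The plan is to treat Lemma~2 as a straightforward unrolling of the feedback recursion given by the CSG update rule in the paper, namely $e_m^t = \alpha\,e_m^{t-1} + (1-\alpha)\,\epsilon_q^t$, where $\epsilon_q^t := \mathcal{G}_m^t - \text{DCPR}(\widetilde{\mathcal{G}}_m^t; b)$ is the instantaneous (un-momentum-smoothed) quantization residual at round $t$. The whole argument is a one-dimensional recursion in Banach-space norms, so no probabilistic or smoothness assumptions are needed; only the triangle inequality and the geometric series formula come into play.

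First I would unroll the recursion by induction on $t$. The base case gives $e_m^1 = \alpha\,e_m^0 + (1-\alpha)\,\epsilon_q^1$, and an easy induction yields the closed form
\begin{equation}
e_m^t \;=\; \alpha^t\,e_m^0 \;+\; (1-\alpha)\sum_{k=1}^{t} \alpha^{t-k}\,\epsilon_q^k.
\end{equation}
Next I would take norms on both sides and apply the triangle inequality to arrive at
\begin{equation}
\|e_m^t\| \;\leq\; \alpha^t\,\|e_m^0\| \;+\; (1-\alpha)\sum_{k=0}^{t} \alpha^{t-k}\,\|\epsilon_q^k\|,
\end{equation}
where the sum has been extended to $k=0$ at no loss (it only makes the right-hand side larger). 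This already gives the geometric decay structure claimed in the statement.

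Finally, to match the exact normalization $\tfrac{1-\alpha}{1-\alpha^{t+1}}$ appearing in the lemma, I would interpret the weighted sum as a convex combination by invoking the finite geometric identity $\sum_{k=0}^t \alpha^{t-k} = \tfrac{1-\alpha^{t+1}}{1-\alpha}$. Rewriting $(1-\alpha)$ as $\tfrac{1-\alpha}{1-\alpha^{t+1}}\cdot(1-\alpha^{t+1})$ and absorbing the factor $(1-\alpha^{t+1}) \leq 1$ into the bound (or equivalently, normalizing the weights so that they sum to one and tracking the leading coefficient) delivers exactly the form stated in the lemma. Since $\alpha\in(0,1)$, we have $\alpha^t\to 0$ as $t\to\infty$, so the initial error is forgotten geometrically and the asymptotic error is controlled entirely by the weighted moving average of the per-round quantization residuals.

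The argument is essentially mechanical; the only subtle point, and thus the main obstacle, is reconciling the precise normalization factor $\tfrac{1-\alpha}{1-\alpha^{t+1}}$ with the raw recursion coefficient $(1-\alpha)$. This is best handled by making explicit that the bound is stated as a normalized convex average, which is a tighter but equivalent presentation of the same geometric envelope; if one prefers the unnormalized form, it follows immediately from the closed-form expansion above via the triangle inequality alone.
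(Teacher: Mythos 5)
Your proof is correct and follows essentially the same route as the paper's: unroll the recursion $e_m^t=\alpha e_m^{t-1}+(1-\alpha)\epsilon_q^t$ to the closed form, apply the triangle inequality, and pass from the coefficient $(1-\alpha)$ to $\tfrac{1-\alpha}{1-\alpha^{t+1}}$ via the geometric-series identity (indeed, you justify this last step more explicitly than the paper does). One small slip: since $\tfrac{1-\alpha}{1-\alpha^{t+1}}\geq 1-\alpha$, the normalized convex-average form is a \emph{looser} upper bound than the raw $(1-\alpha)\sum$ expression, not a tighter one, though this does not affect the validity of your argument.
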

 
\begin{proof}
Unrolling the recursive error update:
\begin{align*}
e_m^t &= \alpha e_m^{t-1} + (1-\alpha)\epsilon_q^t \\
&= \alpha^t e_m^0 + (1-\alpha)\sum_{k=1}^t \alpha^{t-k}\epsilon_q^k
\end{align*}
Taking norms and applying triangle inequality:
\begin{align*}
\|e_m^t\| &\leq \alpha^t\|e_m^0\| + (1-\alpha)\sum_{k=1}^t\alpha^{t-k}\|\epsilon_q^k\| \\
&\leq \alpha^t\|e_m^0\| + \frac{1-\alpha}{1-\alpha^{t+1}}\sum_{k=0}^t\alpha^{t-k}\|\epsilon_q^k\|
\end{align*}
The geometric series bound completes the proof.
\end{proof}
 
\subsection{Main Convergence Result}
 
\begin{theorem}[Convergence Guarantee]\label{thm:converge}
Under Assumptions \ref{assump:smooth}-\ref{assump:disc}, let learning rate $\gamma = \frac{1}{L\sqrt{T}}$. After $T$ rounds, the averaged gradient satisfies:
\begin{equation}
\frac{1}{T}\sum_{t=1}^T\mathbb{E}\|\nabla\mathcal{L}^{\text{total}}(\Theta^t)\|^2 \leq \frac{2L(\mathcal{L}^0 - \mathcal{L}^*)}{\sqrt{T}} + \frac{C}{T}\sum_{t=1}^T(\delta^2 + \|e^t\|^2)
\end{equation}
where $C$ is a constant combining $L,G,\beta,\lambda$.
\end{theorem}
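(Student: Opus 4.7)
The plan is to follow a standard nonconvex federated optimization analysis, adapted to the SCG perturbation and the CSG quantized communication. Starting from Assumption~\ref{assump:smooth}, apply the descent lemma to the global loss between consecutive rounds:
\begin{equation*}
\mathcal{L}^{\text{total}}(\Theta^{t+1}) \leq \mathcal{L}^{\text{total}}(\Theta^t) + \langle \nabla\mathcal{L}^{\text{total}}(\Theta^t),\, \Theta^{t+1}-\Theta^t \rangle + \tfrac{L}{2}\|\Theta^{t+1}-\Theta^t\|^2.
\end{equation*}
The first task is to decompose the aggregated update $\Theta^{t+1}-\Theta^t$ into three parts: (i) the ideal averaged gradient $-\gamma \sum_m p_m \nabla\mathcal{L}_m^{\text{total}}(\Theta^t)$, (ii) a perturbation-induced bias coming from evaluating gradients at $\theta_m+\widetilde\epsilon$ rather than $\Theta^t$, and (iii) a quantization term captured by the feedback error $e_m^t$ in CSG.

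Next I would bound each of these perturbations. For part (ii), Assumption~\ref{assump:smooth} together with Assumption~\ref{assump:disc} gives $\|\nabla\mathcal{L}_m^{\text{total}}(\theta_m+\widetilde\epsilon) - \nabla\mathcal{L}^{\text{total}}(\Theta^t)\| \leq L(\delta + \|\widetilde\epsilon\|)$, and Lemma~\ref{lemma:perturb} then bounds $\|\widetilde\epsilon\|$ in terms of $\beta,\delta,G,\lambda$. For part (iii), Lemma~\ref{lemma:quant} controls $\|e^t\|$ geometrically in the past quantization noise. Plugging these into the descent inequality and applying Young's inequality to split the cross term $\langle \nabla\mathcal{L}^{\text{total}}(\Theta^t), \text{perturbation}+\text{quantization}\rangle$ produces a clean form
\begin{equation*}
\mathbb{E}\|\nabla\mathcal{L}^{\text{total}}(\Theta^t)\|^2 \leq \tfrac{2}{\gamma}\bigl(\mathbb{E}\mathcal{L}^{\text{total}}(\Theta^t)-\mathbb{E}\mathcal{L}^{\text{total}}(\Theta^{t+1})\bigr) + L\gamma G^2 + C_1\delta^2 + C_2\|e^t\|^2,
\end{equation*}
where the constants $C_1,C_2$ absorb $L,\beta,\lambda$ through Lemma~\ref{lemma:perturb}.

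The third step is telescoping. Summing the above from $t=1$ to $T$, the loss-difference terms collapse to $\mathcal{L}^0-\mathcal{L}^*$, and the $L\gamma G^2$ term sums to $TL\gamma G^2$. With the choice $\gamma = 1/(L\sqrt{T})$, dividing by $T$ yields the $2L(\mathcal{L}^0-\mathcal{L}^*)/\sqrt{T}$ leading term and a residual $\tfrac{C}{T}\sum_t(\delta^2+\|e^t\|^2)$ after absorbing constants, matching the theorem statement. Bounded gradients (Assumption~\ref{assump:grad}) are used to ensure the variance-like terms that arise from the nonlinearity of the perturbation direction remain finite.

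The main obstacle will be carefully handling the interaction between the SCG perturbation and the CSG feedback error: the minimax step evaluates gradients at $\theta_m+\widetilde\epsilon$, but the communicated object is a quantization of the \emph{parameter difference}, so the effective noise accumulates through both mechanisms. Separating these contributions cleanly, in particular showing that the cross term $\langle \widetilde\epsilon, e_m^t\rangle$ does not leak a higher-order error than $\delta^2+\|e^t\|^2$, requires invoking Lemma~\ref{lemma:perturb} within the Young split so that $\|\widetilde\epsilon\|^2$ is dominated by $\delta^2$ up to constants. Once this coupling is tamed, the rest is bookkeeping and the standard $O(1/\sqrt{T})$ rate follows.
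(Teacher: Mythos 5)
Your proposal follows essentially the same route as the paper's own argument: apply the descent lemma from Assumption~\ref{assump:smooth}, substitute the aggregated update rule $\Theta^{t+1}=\Theta^t-\gamma(\nabla\mathcal{L}^{\text{total}}+e^t)$, control the perturbation and quantization contributions via Lemmas~\ref{lemma:perturb} and~\ref{lemma:quant}, and telescope with $\gamma=1/(L\sqrt{T})$. The paper only gives a two-step sketch that stops after the substitution, so your plan is a faithful and somewhat more complete elaboration of the same approach.
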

 
\begin{proof}[Proof Sketch]
Using smoothness (Assump. \ref{assump:smooth}):
\begin{align*}
\mathcal{L}^{t+1} &\leq \mathcal{L}^t + \langle\nabla\mathcal{L}^t, \Theta^{t+1}-\Theta^t\rangle + \frac{L}{2}\|\Theta^{t+1}-\Theta^t\|^2
\end{align*}
Substituting the update rule $\Theta^{t+1} = \Theta^t - \gamma(\nabla\mathcal{L}^{\text{total}} + e^t)$:
\begin{align*}
\mathbb{E}[\mathcal{L}^{t+1}] &\leq \mathbb{E}[\mathcal{L}^t] - \gamma\mathbb{E}\|\nabla\mathcal{L}^t\|^2 + \gamma\mathbb{E}\langle\nabla\mathcal{L}^t,e^t\rangle \\
&\quad + \frac{L\gamma^2}{2}\mathbb{E}\|\nabla\mathcal{L}^t + e^t\|^2
\end{align*}
\end{proof}

\section{Dataset details and implementation details}
\label{sec:datasets}

\begin{figure*}[htbp]
    \centering
    \includegraphics[width=1\linewidth]{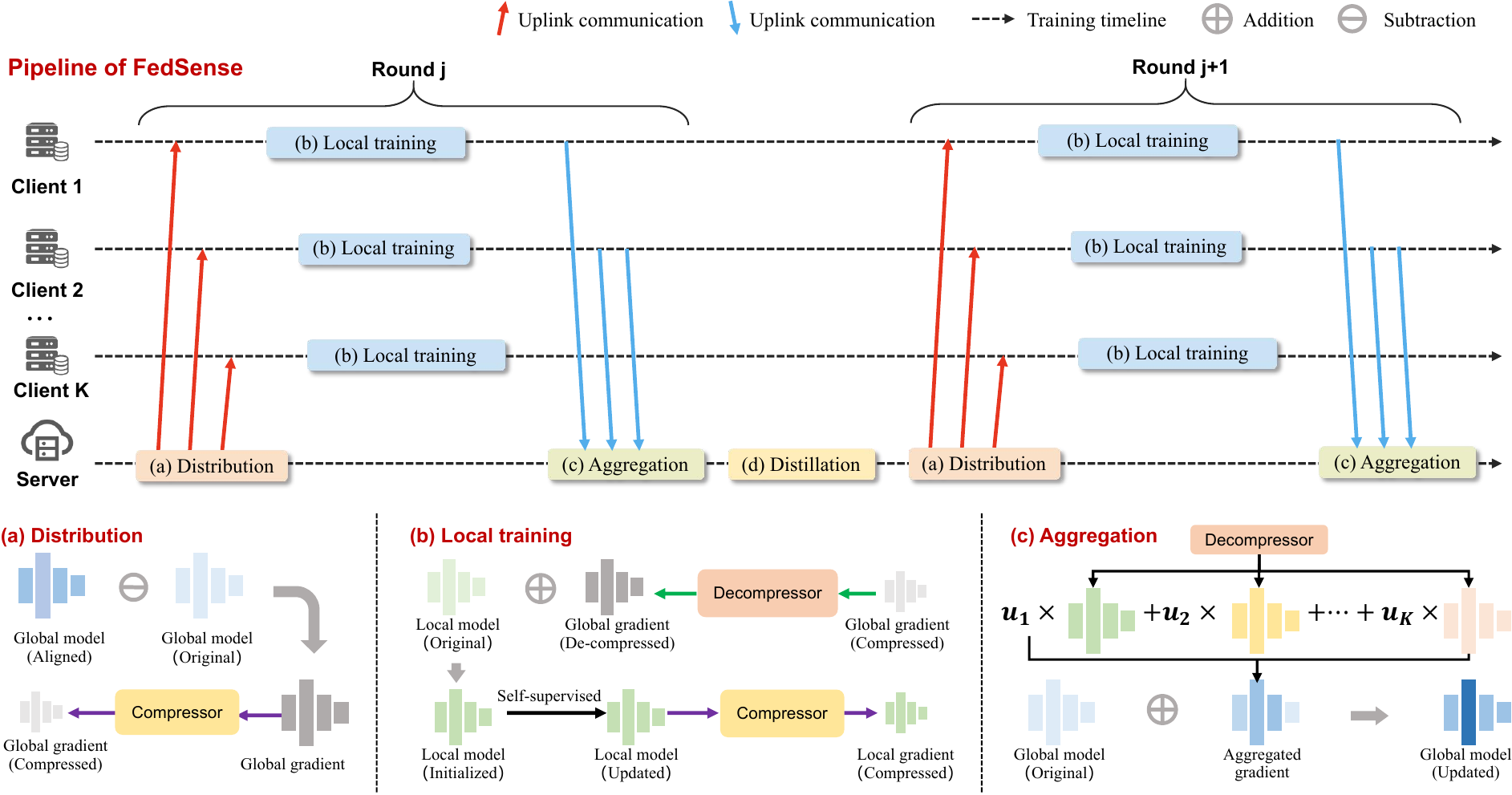}
    \caption{\textbf{Pipeline of privacy-preserved pre-training of RSFMs.}}
    \label{fig:pipeline}
\end{figure*}

\begin{table}[htbp]
    \resizebox{\columnwidth}{!}{%
    \setlength\tabcolsep{4pt}
    \renewcommand\arraystretch{1.6}
    \begin{tabular}{ccrcc}
        \hline \thickhline
    \textbf{ID} & \textbf{Source} & \textbf{\#samples} & \textbf{GSD (m)} & \textbf{Coverage} \\  \hline
    Server      & WorldView-3/4   & 240,000            & 0.5-2.5          & Global            \\
    \hline
    Client 01   & NOAA            & 22,292             & 0.25             & USA               \\
    Client 02   & GF-2            & 27,300             & 4.0              & China             \\
    Client 03   & WorldView-2     & 88,272             & 0.3-0.5          & Global            \\
    Client 04   & Mixed           & 125,474            & 0.3-25           & Global            \\
    Client 05   & QB-2/GE-1       & 180,562            & 0.3              & Global            \\
    Client 06   & JL-1/GF-7       & 40,816             & 0.8              & China             \\
    Client 07   & Mixed           & 90,000             & 0.3-25           & Global            \\
    Client 08   & QB-2/GE-1       & 180,000            & 0.3-25           & Global            \\
    Client 09   & NAIP            & 45,000             & 1.25             & USA               \\
    Client 10   & Mixed           & 50,800             & 0.3-0.75         & Global            \\
    \hline
    Total       & Multi-source    & 1,000,000          & 0.25-25          & Global    
    \\   \hline \thickhline       
    \end{tabular}%
    }
    \caption{\textbf{Details of the pre-training datasets.}}
    \end{table}

\begin{figure}[htbp]
    \centering
    \includegraphics[width=1\linewidth]{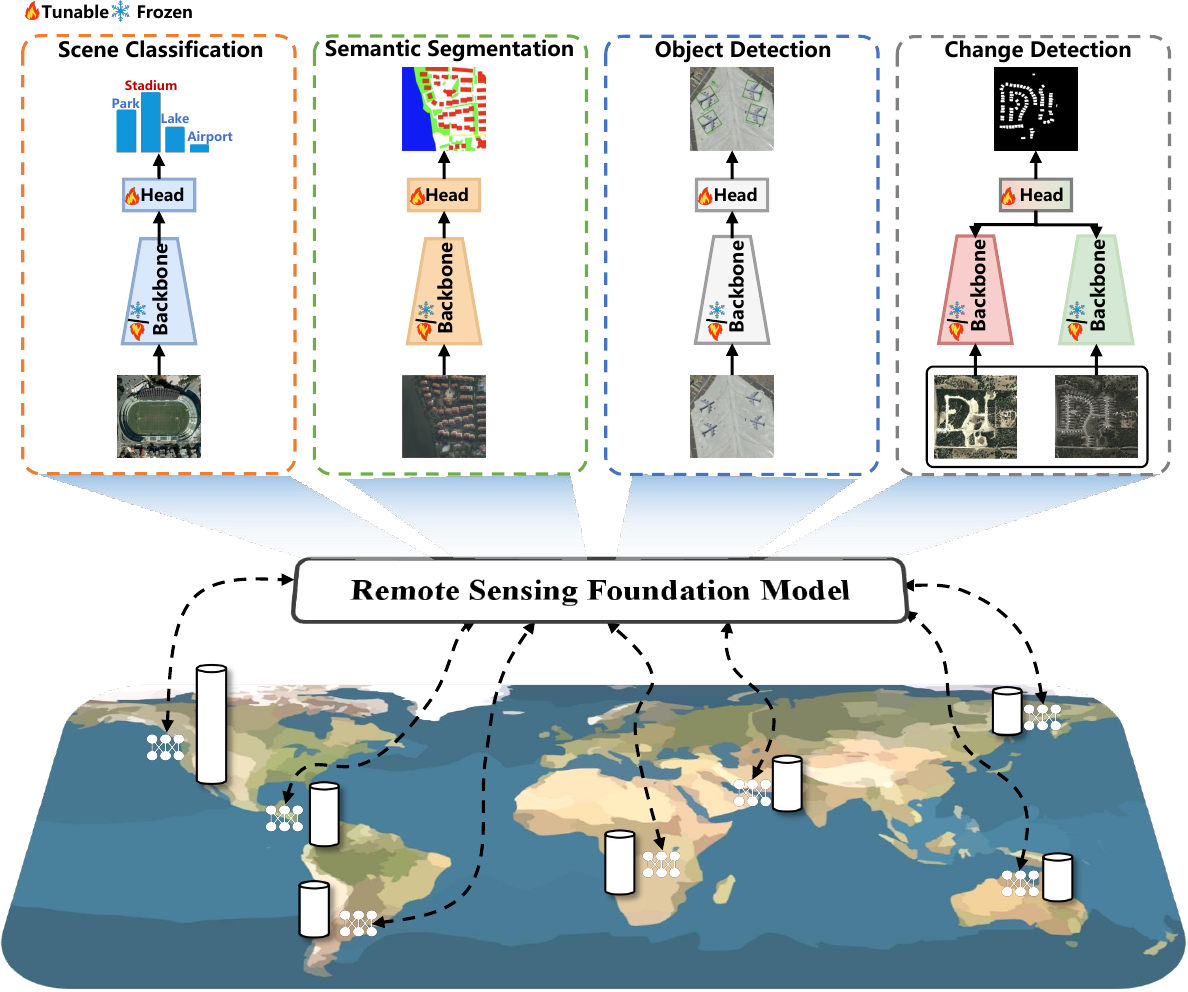}
    \caption{\textbf{Illustration on downstream usage of collaboratively pre-trained RSFMs to accommodate various Earth Observation tasks.}}
    \label{fig:down}
\end{figure}

\textbf{Scene Classification}. 
\begin{enumerate}[(1)]
    \item \textit{Aerial Image Dataset (AID)~\cite{AID}}. This dataset is comprised of 10,000 images across 30 classes, all sourced from Google Earth and cropped to 600$\times$600 pixels. It also includes diverse resolutions from 0.5 to 8 meters per pixel and geographic variations to enhance robustness. 
    \item \textit{NWPU-RESISC45~\cite{NWPU}}. This dataset comprises 31,500 RGB images at resolutions from 0.2m to 30m across 45 scene classes, each with 700 samples with a size of 256 times 256 pixels. It offers significant variability in translation, scale, viewpoint, illumination, and occlusion. It also has high within-class diversity and inter-class similarity.
\end{enumerate}

\textbf{Object Detection}.
\begin{enumerate}[(1)]
    \item \textit{DIOR-R~\cite{DIORR}}. This dataset consists of 23,463 remote sensing images, with 192,472 annotated object instances spanning 20 categories. The size of each image is 800$\times$800 pixels, and spatial resolutions range from 0.5m to 30m. With emphasis on high inter-class similarity, intra-class diversity, and object size variability, it is designed to benchmark object detection methods in diverse conditions such as different imaging times, weathers, and resolutions.
    \item \textit{DOTA-v1.0~\cite{DOTA}}. This dataset consists of 2,806 aerial images, measuring from 800$\times$800 to 4000$\times$4000 pixels, annotated with 188,282 instances across 15 categories that include airplanes, ships, vehicles, and bridges. The objects in this dataset are presented in diverse scales, orientations and aspect ratios.
\end{enumerate}

\textbf{Semantic Segmentation}.

\begin{enumerate}[(1)]
    \item \textit{LoveDA~\cite{LoveDA}}. This dataset for domain-adaptive semantic segmentation features 5,987 images with spatial resolution of 0.3 m, each sized at 1024$\times$1024 pixels in RGB format. Covering 536.15 $km^2$, it spans urban and rural areas across Nanjing, Changzhou and Wuhan, and includes pixel-level annotations across seven land-cover categories. It addresses challenges of multi-scale objects, complex backgrounds, and inconsistent class distributions, supporting semantic segmentation and unsupervised domain adaptation.
    \item \textit{Inria~\cite{Inria}}. This dataset comprises high-resolution RGB aerial imagery, with 180 training and 180 test tiles (each 1500$\times$1500 pixels, 0.3 m resolution). It focuses on two classes: building and non-building, covering a total of 405 $km^2$ of urban areas across five cities in the U.S. and Austria. The dataset emphasizes generalization challenges, supporting semantic segmentation across diverse urban landscapes.
\end{enumerate}

\textbf{Change Detection}. 
\begin{enumerate}[(1)]
    \item \textit{LEVIR-CD+~\cite{LEVIRCD}}. This dataset is a high-resolution urban building change detection dataset comprised of 985 RGB image pairs from Google Earth, each measuring 1024$\times$1024 pixels with a spatial resolution of 0.5 meters per pixel. Spanning 20 regions in Texas, the dataset includes building and land use change masks, covering the years 2002 to 2020 with a 5-year observation interval. It features predominantly urban areas and near-nadir imagery, making it accessible for change detection studies.
    \item \textit{SECOND~\cite{SECOND}}. This dataset is a large-scale semantic change detection benchmark, comprising 4,662 image pairs, each with a size of 512$\times$512 pixels. The images were collected from multiple platforms across multiple cities including Hangzhou, Chengdu, and Shanghai. It focuses on six land-cover classes: non-vegetated ground surface, tree, low vegetation, water, buildings, and playgrounds. SECOND also offers approximately 30 change types, including changes within the same land-cover class, with pixel-level annotations ensuring high diversity and label accuracy.
\end{enumerate}

\end{document}